\theoremstyle{plain}
\newtheorem{proposition}{Proposition}
\def\F{\mathscr{F}}
\def\A{\mathscr{A}}
\DeclareMathOperator{\vect}{vec}
\def\BibTeX{{\rm B\kern-.05em{\sc i\kern-.025em b}\kern-.08em
T\kern-.1667em\lower.7ex\hbox{E}\kern-.125emX}}
\def\R{{\bf R}}
\def\x{\xi}
\def\W{M}
\def\w{m}
\def\tvar{\tau}
\newcommand\blfootnote[1]{%
  \begingroup
  \renewcommand\thefootnote{}\footnote{#1}%
  \addtocounter{footnote}{-1}%
  \endgroup
}
\begin{document}

\title{Developing Constrained Neural Units Over Time
\thanks{}
}

\author{\IEEEauthorblockN{Alessandro Betti, Marco Gori, Simone Marullo, Stefano Melacci}
\IEEEauthorblockA{\textit{Department of Information Engineering and Mathematics} \\
\textit{University of Siena}\\
Siena, Italy \\
alessandro.betti2@unisi.it, marco@diism.unisi.it, simone.marullo@student.unisi.it, mela@diism.unisi.it}
}

\maketitle

\begin{abstract}
In this paper we present a foundational study on a constrained method
that defines learning problems with Neural Networks in the
context of the principle of least cognitive action, 
which very much resembles the principle of least action in mechanics.
Starting from a general approach to enforce constraints
into the dynamical laws of learning, this work focuses on an
alternative way of defining Neural Networks, that is different
from the majority of existing approaches.
In particular, the structure of the neural architecture is defined by means of a special
class of constraints that are extended also to the interaction with data, leading to
``architectural'' and ``input-related'' constraints, respectively.
The proposed theory is cast into the time domain, in which data are presented to the network in
an ordered manner, that makes this study an important step toward alternative ways of processing continuous streams of data with Neural Networks.
The connection with the classic Backpropagation-based update rule of the weights of networks is discussed, showing that there are conditions under which our approach degenerates to Backpropagation.
Moreover, the theory is experimentally evaluated on a simple problem that allows us to deeply study several aspects of the theory itself and to 
show the soundness of the model.
\end{abstract}

\begin{IEEEkeywords}
Cognitive Action Laws, Constrained Neural Networks, Learning Over Time
\end{IEEEkeywords}

\section{Introduction}
\label{sec:2nd-order-reduction-proof}
\blfootnote{Accepted for publication at the IEEE International Joint Conference on Neural Networks (IJCNN) 2020 (DOI: TBA).

© 2020 IEEE. Personal use of this material is permitted. Permission from IEEE must be obtained for all other uses, in any current or future media, including reprinting/republishing this material for advertising or promotional purposes, creating new collective works, for resale or redistribution to servers or lists, or reuse of any copyrighted component of this work in other works.}
When a large amount of supervised data is available, (Deep) Neural Networks have shown to yield impressive results in several real-world tasks \cite{krizhevsky2012imagenet,graves2013speech,xu2015show}.
The classic formulation of supervised learning in Neural Networks consists in optimizing the values of the weights attached to a pre-designed neural architecture in order to fit the given training data under some regularity conditions or, in any case, to control the learning process to avoid overfitting and gain generalization skills \cite{gori2017machine}. Stochastic gradient descent \cite{bottou2012stochastic} is commonly regarded as the \textit{de facto} schema for the optimization of the network weights. According to this approach, at each time step only one sample (or a mini-batch of samples) is considered, and the training data are shuffled before the beginning of any training epoch, completely ignoring any information  eventually available in the data ordering. Whenever the system is designed to learn from data as soon as they become available over time, we move a step toward a real online learning setting. Neural Networks are also exploited in this setting, even if there are several other challenging issues than are not present in the static batch case \cite{parisi2019continual}, and stochastic gradient descent can still be applied, updating the network weights after having processed each newly received sample.

In this paper we follow the ideas behind the principle of  least cognitive action which very much resembles the principle of least action in mechanics, and that was investigated in the context of Neural Networks and Computer Vision in \cite{betti2019cognitive}, developing the so-called \textit{Cognitive Action Laws of Learning}. Such learning framework naturally deals with learning problems in which the time component plays a crucial role, so that it is well-suited to approach in a principled way those cases in which data become available over time, that is the setting of this paper. 

Starting from the seminal work of \cite{lecun1988theoretical}, optimization problems on the weights
of Neural Networks can be equivalently formulated by extending the space of the learnable parameters, introducing a constrained optimization problem on the product space of weights and other neuron-related variables, as also investigated in more recent works \cite{carreira2014distributed,taylor2016training,noia}.
This point of view allows us to describe the  structure of the architecture of the network in terms of constraints involving such extended set of learnable parameters. Moreover, a constraint-based description has implications in the way the network operations can be parallelized and, potentially, in the way the structure of the network is progressively developed, thus offering a very generic perspective on which to build a foundational study.

In this paper we develop this idea by introducing the so-called \textit{architectural constraints} among neurons, and also \textit{input constraints} between each input neuron and the corresponding feature of the considered input example, thus fully breaking down the description of the network in terms of constraints.
We show how these constraints can be applied within the
context of the principle of least cognitive action to model a temporal 
evolution of the variables of the network. This approach allows us to formulate a learning problem that is intrinsically linked, from the beginning,
to the idea of dealing with a temporal trajectory of the weights, as opposed to the classical approaches
where the definition of the learning problem and the choice of the
optimization algorithm consist of two separate and conceptually
orthogonal aspects. 

Our foundational study is developed using the formalism of calculus of variations
with subsidiary conditions. We show how it is possible to formulate the theory using Lagrange multipliers and how those multipliers can be explicitly found by solving a linear system in as many unknowns as the number of neurons in the network. We also discuss how this theory, in a specific regime of the regularization parameters and in certain conditions of dissipation, exactly reproduces the Backpropagation rule for the
computations of the gradients.

There are many reasons to pursue this {\em dynamical} approach to learning \cite{betti2019cognitive,ijcai2019-278}, especially in those
learning problems where the data comes as a (temporally) coherent signal.
Dynamical constraints, such as invariance under motion \cite{ijcai2019-278}, can be valuable in guiding the learning process, and the theory presented in this paper is
developed as a building block to enable the implementation and the analysis of
such temporal constraints for deep architectures (for example, in order to overcome the layer-wise training limitations of \cite{ijcai2019-278}).

This paper is organized as follows. Section \ref{sec:constraints} describes the proposed theoretical study, that is experimentally assessed focusing on a simple and easy-to-understand problem in Section \ref{sec:exp}. Finally, Section \ref{sec:conclusions} concludes the paper with our ideas about future work.

\section{Learning Framework}
\label{sec:constraints}
Our learning framework is rooted on the idea of describing a Neural Network by means of constraints among neural units.
Before going into a formal definition and describing all the details, we illustrate the basic idea using a running example. Then, in Section \ref{sec:least} we will inject the network into the time domain, while the connection with Backpropagation is formalized in Section \ref{sec:bp}.

\indent
\parshape 7
0pc 15pc
0pc 15pc
0pc 15pc
0pc 15pc
0pc 15pc
0pc 15pc
0pc \hsize
\vadjust{\moveright 15pc\vbox to 0pt{\vskip -.8pc
\hsize=0pt\vtop{\includegraphics{./ml-33.mps}}\vss}}
The diagram on the right shows a feed-forward network with $2$ input units (triangles), $2$ hidden and $1$ output neurons (circles). The graph of the architecture makes explicit the way in which the values associated with the
input nodes (vertices $0$ and $1$) are propagated up until the output of the
network (vertex $4$). In particular, if we denote with  $x^i$ the output of
the $i$-th neuron and with $w_{ij}$ the weight associated with the arch
$j \rightarrow i$, then the diagram in the figure implies
$x^2=\sigma(w_{20}x^0+w_{21}x^1)$, $x^3=\sigma(w_{30}x^0+w_{31}x^1)$
and $x^4=\sigma(w_{42}x^2+w_{43}x^3)$, being $\sigma$ the activation function. Therefore, in the
$(w,x)$ space these compositional relations between the nodes variables $x^i$, $i=0,\ldots,4$ can be
regarded as \textit{architectural constraints}, namely $G^2=G^3=G^4=0$, where:
\[G^2=x^2-\sigma(w_{20}x^0+w_{21}x^1), \quad
G^3=x^3-\sigma(w_{30}x^0+w_{31}x^1),\]
\[G^4=x^4-\sigma(w_{42}x^2+w_{43}x^3).\]
Moreover the way in which input signals are provided to the network can be regarded as additional constraints. Let us suppose we want to
compute the value of the network on the input $x^0=e^0$ and $x^1=e^1$, where
$e^0$ and $e^1$ are two scalar values; this two assignments can be interpreted as
two additional \textit{input constraints} $G^0=G^1=0$ where
$$G^0=x^0-e^0,\quad G^1=x^1-e^1.$$

We can now formally describe the models that we consider in this paper, that in Section \ref{sec:least} will be used to formulate the learning problem
in the context of the time domain.
Given a simple digraph ${D=(N,A)}$ of order $\nu$, where $N$ is the set of vertices, while $A$ is the set of edges. Without loss of
generality, we can assume ${N=\{0,1,\dots,\nu-1\}}$ and $A\subseteq N\times N$. A neural network associated with $D$
consists of a set of maps
$i\in N\mapsto x^i\in \R$ and 
$(i,j)\in A\mapsto
w_{ij}\in \R$ together with $\nu$ constraints $G^j(x,W)=0$,
$j=0,1,\dots \nu-1$ where $W$ is the weight matrix (i.e., $(W)_{ij}=w_{ij}$) and $x$ the vector of the outputs of the neurons.
Let ${\cal M}_\nu(\R)$ be the
set of all $\nu\times\nu$ real matrices and 
${\cal M}^{\downarrow}_\nu(\R)$ the set of all $\nu\times\nu$
strictly lower triangular matrices over $\R$. In the rest of the paper we will
always assume that
$W\in{\cal M}^{\downarrow}_\nu(\R)$, i.e.,  a directed acyclic graph.
The relations
$G^j=0$ for $j=0,\dots, \nu-1$
 specify the computational scheme with which the information
diffuses through the network. 
 
In a typical network with $\omega$ inputs, the structure of these
constraints are defined as follows (see also Fig.~\ref{neuron-const-fig}).
For any vector $\x\in\R^\nu$, for any matrix $\W\in{\cal
M}_\nu(\R)$ with entries $\w_{ij}$ and for any given ${\cal C}^1$ map (i.e., a differentiable map whose derivative is continuous)
$e\colon [0,+\infty)\to\R^\omega$ we define the constraint
function on neuron $j$
when the example $e(\tau)$ is presented to the network as
\begin{equation}G^j(\tau,\x,\W):=\begin{cases}\
\x^j-e^j(\tau), & \text{if $0\le j < \omega$};\\
\x^j-\sigma(\w_{jk} \x^k) & \text{if $\omega \le j < \nu$},
\end{cases}\label{neuron-constraints-structure}\end{equation}
where $\sigma\colon \R\to\R$ is of class ${\cal C}^2(\R)$ (i.e., it is differentiable and its derivative is of class ${\cal C}^1(\R)$), and we used the
Einstein notation (i.e., there is a sum on the index $k$), that we will exploit throughout the whole paper to  simplify the notation.\footnote{
Summation is intended over all repeated indices of an expression. 
}
With this choice it is clear that the set of all constraints $G^j(\tau,x,W)=0$, $j=0,\dots,\nu-1$
is the usual computational scheme of a feed-forward network with input $e(\tau)$ whenever we choose
$W\in {\cal M}^{\downarrow}_\nu(\R)$. Throughout the paper we will use
the notation $G^j_\tau$, $G^j_\xi$, $G^j_M$ for the partial derivatives
with respect to  the first, second and third arguments
of $G^j$, respectively.
Notice that the dependence of the constraints on $\tau$ reflects
the fact that the computations of a neural network should be based on
external inputs.

\def\|#1|{\vbox{\hbox{\includegraphics{./const-#1.mps}}}}
\def\<#1>{\hbox{\includegraphics{./const-#1.mps}}}
\begin{figure}[t]
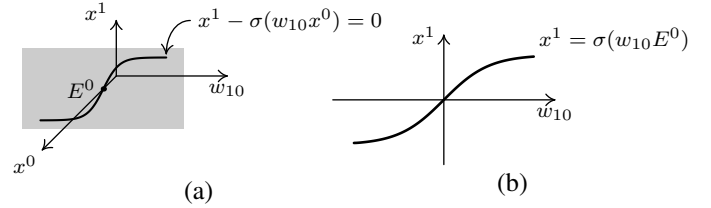

\[{\vcenter{\|5|}\atop\hbox{(a)}}
\mskip -40 mu
{\vcenter{\|3|}\atop\hbox{(b)}}\]
\vskip -1pc
\caption{Visualization of the neural constraints
for the neural network \protect\<4> (two neurons $\{0,1\}$).  Constraint $G^1(x,W)=0$, 
restricted to the plane $x^0=E^0$, is shown in (a). In (b), 
such restriction is represented in the  $w_{10}$--$x^1$ plane.}
\label{neuron-const-fig}
\end{figure}

\subsection{Principle of Least Cognitive Action}
\label{sec:least}
Human cognitive processes do not emerge with a well-defined distinction between training and test set. As time goes by, humans react efficiently to new stimuli, which suggests us to look for alternative foundations of learning by embedding the agent into its own time-driven learning environment.  

Following the ideas of \cite{betti2019cognitive}, we establish a link with mechanics by paralleling the weights $W$, along with the neuronal outputs $x$, to the Lagrangian coordinates of a system of particles. We define the temporal trajectories of the variables of the learning problem by laws that come from stationarity conditions of a functional, as it happens for canonical coordinates in classical mechanics. 

In particular, we indicate with $t \in [0, +\infty)$ the time variable and
we assume that the updates rules of the
parameters are obtained from the stationarity conditions of the functional
\begin{equation}
\A(x,W):=\int {1\over 2}(m_x\vert \dot x(t)\vert^2+m_W\vert \dot W(t)\vert^2)\,\varpi(t) dt+
\F(x,W),\label{cognitive-action}\end{equation}
where $m_x$ and $m_{W}$ are positive scalars (also referred to as \textit{masses}, in analogy with physics), $\varpi(t)$ is a time-dependent positive weighing term, and ${\F(x,W):=\int F(t,x,\dot x,\ddot x,W,\dot W,\ddot W)\, dt}$ (where $F$ is related to the \textit{potential} energy of the system). Notice that the kinetic energy, which appears in $\A(x,W)$,  is a sort of temporal regularization term that, once minimized, leads to develop weights that settle to constant values.
The functional in Eq.~(\ref{cognitive-action}) is subject to the previously introduced constraints
\begin{equation}G^j(t,x(t),W(t))=0,\qquad 0\le j < \nu,
\label{neuron-constraints}\end{equation}
where the map $G^j(\cdot,\cdot,\cdot)$ is taken as in
Eq.~\eqref{neuron-constraints-structure}. Notice that, in what follows, we will sometimes drop the dependency on the variables to simplify the notation.

Let $({G_{\x}\atop {G_{\W}}})$ be the $(\nu + \nu^2) \times \nu$ Jacobian matrix of the constraints
$G$ with respect to its second and third argument, respectively ($\xi$ and $M$):
\[\left({G_{\x}\atop {G_{\W}}}\right)_{ij}:=
\begin{cases}
G^j_{\xi^i} &\text{for}\quad 0 \le i< \nu,\\
\noalign{\medskip}
G^j_{\vect(M)_{i-\nu}} &\text{for}\quad \nu \le i< \nu+\nu^2,
\end{cases}
\]
where, for any matrix $M$, $\vect(M)$ is a vectorization of $M$.
Variational problems with subsidiary conditions can be tackled
using the method of Lagrange multipliers to convert the constrained problem into
an unconstrained one (see~\cite{giaquita-hildebrandt}).
In order to use this method, it is
necessary to verify the independence hypothesis between the constraints;
in this case we should check that the matrix $({G_\x\atop {G_\W}})$
is full rank.
Indeed the following proposition holds true:
\begin{proposition}
The matrix $({G_\x\atop  {G_\W}})\in {\cal M}_{(\nu^2+\nu)\times
\nu}(\R)$ is full rank.
\label{proposition}
\end{proposition}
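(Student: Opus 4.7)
My plan is to show that the upper $\nu\times\nu$ block $G_\xi$ is already invertible on its own. Since $\binom{G_\xi}{G_M}$ has only $\nu$ columns, this immediately forces its column rank to be $\nu$, which is exactly what full rank means for a $(\nu^2+\nu)\times\nu$ matrix.

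First I would read off the entries of $G_\xi$ directly from \eqref{neuron-constraints-structure}. For the input constraints ($0\le j<\omega$) one has $G^j_{\xi^i}=\delta_{ji}$, so column $j$ of $G_\xi$ is just the standard basis vector $e_j$. For the architectural constraints ($\omega\le j<\nu$) the chain rule gives $G^j_{\xi^i}=\delta_{ji}-\sigma'(m_{jk}\xi^k)\,m_{ji}$, where $m_{ji}$ is the $(j,i)$ entry of $M$.

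Second, the crucial observation is the standing hypothesis $M\in{\cal M}^{\downarrow}_\nu(\R)$: strict lower triangularity means $m_{ji}=0$ whenever $i\ge j$, so the term $-\sigma'(m_{jk}\xi^k)\,m_{ji}$ automatically vanishes on and below the diagonal of $G_\xi$. Therefore the $(i,j)$-entry of $G_\xi$ equals $1$ if $i=j$, equals $0$ if $i>j$, and is some scalar if $i<j$. In other words, $G_\xi$ is upper triangular with unit diagonal, hence has determinant equal to $1$ and rank $\nu$.

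Finally, adjoining $G_M$ below $G_\xi$ can only preserve or increase the column rank, so $\binom{G_\xi}{G_M}$ has rank $\nu$, which is its maximum possible rank. I do not anticipate a real obstacle: the entire argument is a direct computation that leans on the DAG structure encoded by the strict lower triangularity of $M$, together with the specific form of the input and architectural constraints. The $G_M$ block plays no role in the rank argument itself, although it will of course matter later when the Lagrange multipliers are explicitly solved for.
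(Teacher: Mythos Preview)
Your proposal is correct and follows essentially the same route as the paper: both show that the upper $\nu\times\nu$ block $G_\xi$ is upper triangular with unit diagonal by computing $G^j_{\xi^i}$ from \eqref{neuron-constraints-structure} and invoking the strict lower triangularity of $M$, then conclude that the stacked matrix inherits full column rank. Your write-up is in fact slightly more explicit about why the diagonal entries equal $1$ (namely $m_{jj}=0$), but the argument is the same.
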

\begin{proof}
First of all notice that if $(G_\x)_{ij}=G^j_{\x^i}$ is full rank
also $({G_\x\atop {G_\W}})$ has this property. Then, since
\[G^j_{\x^i}(\tau,\x,\W)=\begin{cases}
\delta_{ij}, & \text{if $0\le j < \omega$};\\
\delta_{ij}-\sigma'(\w_{jk} \x^k)\w_{ji} & \text{if $\omega \le j < \nu$,}
\end{cases}\]
we immediately notice that $G^i_{\x^i}=1$ and that for all $i>j$ we have
$G^i_{\x^i}=0$. This means that
\[
(G^j_{\x^i}(\tau,\x,\W))=\begin{pmatrix}
                          1&*&\cdots&*\\
                          0&1&\cdots&*\\
                          \vdots&\vdots&\ddots&\vdots\\
                          0&0&\cdots&1\end{pmatrix},\]
which is clearly full rank.
\end{proof}
This result is sufficient for the existence of Lagrange multipliers  $\lambda_j(t)$  such that
the (weak) extremals of \eqref{cognitive-action} subject to
constraints~\eqref{neuron-constraints} are extremals of the following functional:
\begin{equation}
\begin{aligned}
\A\ ^*(x,W)&=\int {1\over 2}(m_x\vert \dot x(t)\vert^2+m_W\vert \dot W(t)\vert^2)\varpi(t)\, dt\\
&-\int \lambda_j(t)G^j(t,x(t),W(t))\, dt+\F(x,W).
\end{aligned}
\end{equation}
The equations ruling the update of model parameters $(x,W)$ come from the Euler-Lagrange equations for this variational problem:
\begin{align}
  \begin{split}-m_x\varpi(t)\ddot x(t)-m_x\dot\varpi(t) \dot x(t)
  &-\lambda_j(t)G^j_\x(x(t),W(t))\\
&+L^x_F(x(t),W(t))=0;\label{x-eq}\end{split}\\
\begin{split}-m_W\varpi(t)\ddot W(t)-m_W\dot\varpi(t) \dot W(t)
&-\lambda_j(t)G^j_\W(x(t),W(t))\\
&+L^W_F(x(t),W(t))=0,\label{W-eq}\end{split}
\end{align}
where ${L_F^x=F_x-d(F_{\dot x})/dt+d^2(F_{\ddot x})/dt^2}$ and 
${L_F^W=F_W-d(F_{\dot W})/dt+d^2(F_{\ddot W})/dt^2}$
are the functional derivatives of $F$ with respect to $x$ and $W$
respectively (see \cite{giaquita-hildebrandt, courant-hilbert}).

An expression to compute the Lagrange
multipliers
is derived by differentiating two times the constraints with respect to
time and using the obtained quantities to substitute the second-order terms
in the Euler-Lagrange equations:
\begin{equation}
\begin{aligned}
\Bigl({G^i_{\xi^a}G^j_{\xi^a}\over m_x}+{G^i_{m_{ab}}G^j_{m_{ab}}
\over m_W}\Bigr)\lambda_j=&
\varpi\bigl(G^i_{\tvar\tvar}+2(G^i_{\tvar \x^a}\dot x^a\\
&+G^i_{\tvar \w_{ab}}\dot w_{ab}
+G^i_{\x^a\w_{bc}}\dot x^a\dot w_{bc})\\
&+G^i_{\x^a\x^b}\dot x^a \dot x^b
+G^i_{\w_{ab}\w_{cd}}\dot w_{ab}\dot w_{cd}\bigr)\\
&-\dot \varpi( \dot x^a G^i_{\xi^a}+\dot w_{ab} G^i_{m_{ab}})\\
&+{L^{x^a}_FG^i_{\xi^a}\over m_x}+{L^{w_{ab}}_FG^i_{m_{ab}}\over m_W},
\end{aligned}\label{mult-lin-eq-x-w}
\end{equation}
where $G^i_\tvar$, $G^i_{\tvar\tvar}$, $G^i_{\xi^a}$, $G^i_{\xi^a\xi^b}$,
$G^i_{\w_{ab}}$ and  $G^i_{\w_{ab}\w_{cd}}$ are the gradients and the
Hessians of constraint~\eqref{neuron-constraints}. 
In order to compute the multipliers $\lambda_j$, Eq. (\ref{mult-lin-eq-x-w}) must be solved. 
We conjecture that an approximate solution scheme can be implemented, by using an iterative procedure seeded with the multipliers computed at the previous time instant. However, in this paper we focus on the exact solution of Eq. (\ref{mult-lin-eq-x-w}).

\medskip
\noindent
{\bf Initial conditions. }
Now let us suppose that we want to design  an iterative update algorithm for $x$ and $W$ over time, starting from an arbitrary initial point and going on according to the learning theory so far stated. In this framework, the update algorithm comes straight from the numerical solution of the differential Eqs.~\eqref{x-eq}--\eqref{W-eq}
with Cauchy initial conditions.
Clearly, we have to choose $W(0)$ and $x(0)$ such that $g_i(0)\equiv 0$, where
we define $g_i(t):= G^i(t, x(t), W(t))$, for $i=0,\dots,\nu-1$. However, since the
constraints must hold also for all $t\ge0$, we also need
$g'_i(0)=0$. These conditions, written explicitly, mean:
\[\begin{aligned}
G^i_\tau(0,x(0),W(0))&+G^i_{\xi^a}(0,x(0),W(0))\dot x^a(0)\\
&+G^i_{m_{ab}}(0,x(0),W(0))\dot w_{ab}(0)=0.\end{aligned}\]
If the constraints do not depend explicitly on time, it is sufficient to choose $\dot x(0)=0$ and $\dot W(0)=0$, while for time dependent constraints
this condition leaves:
\[G^i_\tau(0,x(0),W(0))=0,\]
which is an additional constraint on the initial conditions $x(0)$ and $W(0)$
to be satisfied.
Therefore, one possible consistent way to impose Cauchy conditions is
\begin{equation}
\begin{aligned}
&G^i(0,x(0),W(0))=0,\quad i=0,\dots,\nu-1;\\
&G^i_\tau(0,x(0),W(0))=0,\quad i=0,\dots,\nu-1;\\
&\dot x(0)=0;\\
&\dot W(0)=0.
\end{aligned}
\label{initial-cond}
\end{equation}
Once we fix $x(0)$, $W(0)$, $\dot x(0)$ and $\dot W(0)$ as
above, higher derivatives of the variables $x$ and $W$ are determined by
the differential equations \eqref{x-eq}--\eqref{W-eq}.

\subsection{Reduction to Backpropagation}
\label{sec:bp}
Let us consider a supervised problem described by the potential (a loss
function) $V(t,x(t))$, which measures the error on the example presented at
time $t$ when the outputs of the neurons are $x(t)$;
in particular  choose
\begin{equation}
F(t,x(t),\dot x(t), \ddot x(t),W(t),\dot W(t),\ddot W(t))=-e^{\vartheta t}
V(t,x(t)) \ .
\label{lossy}
\end{equation}
where $\vartheta$ is a damping factor.
We will now show that
Eqs.~\eqref{x-eq}--\eqref{mult-lin-eq-x-w} in the limit
$m_x\to 0$, $m_W\to 0$, $m_x/m_W\to 0$ reproduce first-order
dynamics, where the updates of $W$ are performed as
prescribed by Backpropagation. In order to see this, choose $\gamma =  m_W \vartheta$ and multiply both sides of Eqs.~\eqref{x-eq}--\eqref{mult-lin-eq-x-w} by
$\exp(-\vartheta t)$, then take the limit
$m_x\to 0$, $m_W\to 0$, $m_x/m_W\to 0$. In this limit, Eq.~\eqref{W-eq} and
Eq.~\eqref{mult-lin-eq-x-w} become,  respectively:
\begin{align}
&\dot W_{ij}=-\frac1\gamma  \sigma'(w_{ik}x^k) \delta_i x^j;\label{BP-grad}\\
&G^i_{\xi^a} G^j_{\xi^a} \delta_j=-V_{x^a}G^i_{\xi^a},\label{backward}
\end{align}
where $\delta_j$ is the limit of $\exp(-\vartheta t)\lambda_j$ and
$V_x$ is the partial derivative of $V$ with respect to its second variable
($x$).
Because the matrix $G^i_{\x^a}$ is invertible Eq.~\eqref{backward} actually becomes
\begin{equation}T\delta=-V_x,\label{backward-true}\end{equation}
where $T_{ij}:= G^j_{\xi^i}$. This matrix is upper triangular, thus 
explicitly showing the backward structure of the propagation of the delta
error of the Backpropagation algorithm.
In supervised problems, $V$ depends only on the output value of the network. Hence, whenever $i$ is not an output neuron,
computing $\delta_i$ by solving Eq.~\eqref{backward-true} corresponds to:
\[\delta_i=\sigma'(w_{jk} x^k)w_{ji}\delta_j.\]
On the other hand, regarding the output units, Eq.~\eqref{backward-true} enforces the
initialization of the $\delta$s based on the value
of the supervision:
\[\delta_i=- V_{x^i}.\]
It is clear from these remarks that in Eq.~\eqref{BP-grad}
the Lagrange multiplier-related term $\delta$ plays the role of the delta error
of Backpropagation.

\section{Experimental analysis}
\label{sec:exp}
We investigated an easy-to-understand task in order to provide an experimental assessment of the proposed theory. 
The task consists in classifying 2-dimensional data according to the well-known non-linearly separable XOR Boolean function. 
However, this task is defined in the time domain, so that data is provided to the system in an online fashion, according to a function that basically models a trajectory in the input space, passing through the vertices of the Boolean hypercube. In particular, samples are collected along a circular trajectory, as shown in Fig. \ref{fig:trajectory}. At each time instant, a sample is provided to the system. Differently, supervision is provided to the system only when the coordinates $(e^0, e^1)$ of the input sample belong to small circular regions ($R = 0.2$) centered around the vertices of the hypercube; in this case, the target of the nearest vertex is considered.
 \begin{figure}[h!]
\centering
  \includegraphics[width=0.5\textwidth]{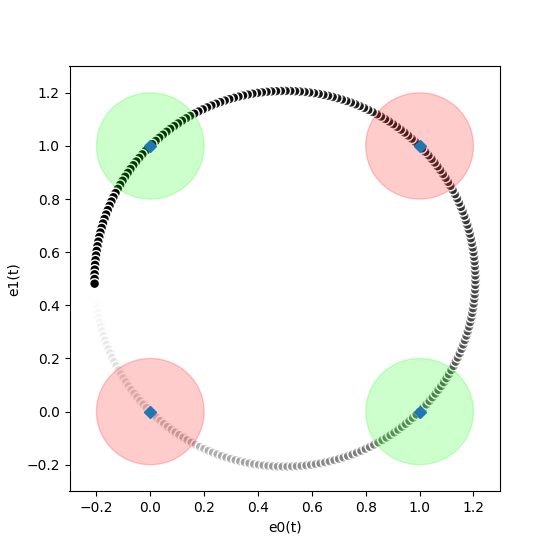}
  \caption{Trajectory along which samples are fed to the system. When we sample from within the red/green regions, a supervision is also provided. The blue diamonds are the vertices of the Boolean hypercube.}
  \label{fig:trajectory}
\end{figure}
In detail, the equations of the trajectory are
\[\begin{split}
e^0(t) &= \phi(t)  (u^0 + R \cos({\Omega t}))\\
 e^1(t) &= \phi(t)  (v^0 + R \sin({\Omega t}))\\
 \phi(t) &= \frac{1}{2}\left(1 + \tanh\Bigl(\frac{t-a}{b}\Bigr)\right)
\end{split}\]
where $u^0 = 0.5$, $v^0 = 0.5$, $a = 6$, $b=0.9$, $\Omega = 0.25$. 
The smoothing function $\phi$ is used to get a null derivative of the signals at $t=0$. 

We used a model similar to the one that we introduced as running example in Section \ref{sec:constraints}, i.e. a one-hidden-layer feed-forward neural network with hyperbolic tangent as activation function $\sigma$ and biases, as shown in Fig. \ref{fig:network}. We used a potential (loss) term that is structured as Eq. \eqref{lossy} and the loss function $V$ is the classic Mean Squared Error (MSE).
Model accuracy and the loss value are computed either on the Boolean hypercube corners or on a set of points sampled from uniform distributions over the supervision regions. In particular, we used the notation \textit{Acc}, \textit{Loss} to indicate the former case, while \textit{Acc2}, \textit{Loss2} are about the latter. Each experiment consists of $10$ runs  with different random initializations of $W \sim U(-\sqrt{2}, \sqrt{2}\,)$, being $U$ the uniform distribution. Results are averaged over the runs ($\pm$ standard deviation).
\begin{figure}[h!]
\def\a{-{Latex[length=2mm]}}
\begin{tikzpicture}
[   cnode/.style={draw=black,fill=#1,minimum width=4mm,circle},
]
    \node[cnode=magenta,label=270:$x^4(t)$] (s) at (5,-2) {};
    \node[label=0:$\color{red}y(t)$] (y) at (6,-2) {};
    \node[label=180:$\color{red}e^0(t)$] (e0) at (-1,-0) {};
    \node[label=180:$\color{red}e^1(t)$] (e1) at (-1,-2) {};
   
    \node[cnode=blue,label=90:$x^0(t)$] (x-0) at (0,{-0}) {};
    \node[cnode=blue,label=270:$x^1(t)$] (x-1) at (0,{-2}) {};
    \node[cnode=green,label=180:$1$] (b) at (0,{-4.5}) {};
   
    \draw[\a,dotted,color=red] (e0) -- (x-0);
    \draw[\a,dotted,color=red] (e1) -- (x-1);
   
    \node[cnode=cyan,label=90:$x^2(t)$] (x-2) at (3,{-0}) {};
    \node[cnode=cyan,label=270:$x^3(t)$] (x-3) at (3,{-2}) {};
   
    \draw[\a] (x-3) -- node[above,sloped,pos=0.3] {$w_{43}$} (s);
    \draw[\a] (x-2) -- node[above,sloped,pos=0.3] {$w_{42}$} (s);
   
    \draw[\a] (x-0) -- node[above,sloped] {$w_{20}$} (x-2);
    \draw[\a] (x-0) -- node[above,sloped,pos=0.3] {$w_{30}$} (x-3);
    \draw[\a] (b) -- node[above,sloped,pos=0.3] {$b_2$} (x-2);
    \draw[\a] (b) -- node[above,sloped,pos=0.6] {$b_3$} (x-3);
    \draw[\a] (b) -- node[above,sloped,pos=0.4] {$b_4$} (s);
    \draw[\a] (x-1) -- node[above,sloped,pos=0.2] {$w_{21}$} (x-2);
    \draw[\a] (x-1) -- node[above,sloped,pos=0.4] {$w_{31}$} (x-3);
   
    \draw[\a,dotted,color=red] (y) -- (s);
\end{tikzpicture}
\caption{Neural architecture: $e^0(t)$ and $e^1(t)$ are the input signals, while $y(t)$ is the supervision signal.}
\label{fig:network}
\end{figure}


We compare the proposed approach with a \textit{baseline} system that exploits stochastic gradient descent to optimize the loss function $V$ only, operating in the same online learning setting described above. 
The considered task generates samples that have a very high temporal correlation, making the task not trivial. 
Table~\ref{tab:baseline} reports the results of the baseline model, as a function of the choice of the learning rate $\eta$. It is interesting to notice that such model fails to correctly classify the data in most of the cases, being it extremely sensitive to the choice of $\eta$, as expected.

The proposed learning approach involves solving differential equations of a \textit{second-order model} (see Section \ref{sec:least}, Eq. \ref{x-eq}, Eq. \ref{W-eq}). We used the LSODA solver, available in the ODEPACK Fortran library. The algorithm, originally proposed in \cite{lsoda}, implements Adams and BDF methods with automatic stiffness detection and switching. We set $\Delta t = 0.1$ and error control performed by the solver is determined by $rtol = atol = 1.49 \cdot 10^{-8}$. 
We also consider a \textit{first-order model} that is implemented by assuming to have reached the limit conditions of (Section \ref{sec:bp}). 
On the other hand, for 1st-order equations (Section \ref{sec:bp}) we used the Euler method, in order to reproduce the small weight updates typical of gradient descent methods. In all the following experiments, we ensured that the learning rate of the baseline model, $\eta$, is chosen coherently with the scaling factors of Eq. \ref{BP-grad} and the step-size $\Delta t$. In particular, we set
$\eta= \Delta t/\gamma = \Delta t/(m_W \vartheta)$.

%

%


\begin{table}[htbp]
\caption{Performance of the baseline model as a function of the learning rate $\eta$.}
  \label{tab:baseline}
\begin{center}  
  \begin{tabular}{|c|c|c|c|c|}
    \hline
    $\eta$ &
     Acc & Loss & Acc2 & Loss2 \\
\hline
0.001 & 0.650 {\tiny $\pm$ 0.122} & 0.906 {\tiny $\pm$ 0.133} & 0.651 {\tiny $\pm$ 0.117} & 0.911 {\tiny $\pm$ 0.132} \\                                0.0025 & 0.700 {\tiny $\pm$ 0.150} & 0.759 {\tiny $\pm$ 0.209} & 0.697 {\tiny $\pm$ 0.147} & 0.767 {\tiny $\pm$ 0.205} \\                               0.01 & 0.825 {\tiny $\pm$ 0.115} & 0.475 {\tiny $\pm$ 0.283} & 0.823 {\tiny $\pm$ 0.116} & 0.491 {\tiny $\pm$ 0.278} \\                                 0.03 & 0.875 {\tiny $\pm$ 0.125} & 0.398 {\tiny $\pm$ 0.388} & 0.873 {\tiny $\pm$ 0.128} & 0.410 {\tiny $\pm$ 0.392} \\                                 0.1 & 0.900 {\tiny $\pm$ 0.166} & 0.305 {\tiny $\pm$ 0.472} & 0.902 {\tiny $\pm$ 0.161} & 0.313 {\tiny $\pm$ 0.476} \\                                  0.125 & 0.900 {\tiny $\pm$ 0.166} & 0.315 {\tiny $\pm$ 0.492} & 0.901 {\tiny $\pm$ 0.163} & 0.321 {\tiny $\pm$ 0.496} \\                                0.3 & 0.900 {\tiny $\pm$ 0.166} & 0.349 {\tiny $\pm$ 0.560} & 0.900 {\tiny $\pm$ 0.166} & 0.353 {\tiny $\pm$ 0.562} \\                                  0.5 & 0.900 {\tiny $\pm$ 0.166} & 0.369 {\tiny $\pm$ 0.601} & 0.900 {\tiny $\pm$ 0.166} & 0.371 {\tiny $\pm$ 0.602} \\                                  0.8 & 0.950 {\tiny $\pm$ 0.100} & 0.197 {\tiny $\pm$ 0.393} & 0.950 {\tiny $\pm$ 0.101} & 0.199 {\tiny $\pm$ 0.397} \\                                  1 & 0.975 {\tiny $\pm$ 0.075} & 0.096 {\tiny $\pm$ 0.288} & 0.974 {\tiny $\pm$ 0.076} & 0.102 {\tiny $\pm$ 0.293} \\                                    2 & 0.550 {\tiny $\pm$ 0.100} & 1.800 {\tiny $\pm$ 0.400} & 0.550 {\tiny $\pm$ 0.099} & 1.802 {\tiny $\pm$ 0.395} \\                                    3 & 0.525 {\tiny $\pm$ 0.075} & 1.900 {\tiny $\pm$ 0.300} & 0.524 {\tiny $\pm$ 0.071} & 1.904 {\tiny $\pm$ 0.287} \\
\hline

  \end{tabular}
\end{center}
\end{table}
\subsection{Comparison between First-Order Model and Baseline}\label{sec:1st-vs-bs}
In Section \ref{sec:bp} we discussed the relationships between a special instance of our model, that is the first-order case, and Backpropagation.

We compared the trajectories of the weights and biases comparing such model with the baseline case, setting ${\eta = \Delta t/\gamma = 0.3}$ using the previously described criterion, and reporting them in Fig. \ref{fig:1st-vs-bs}. 
For each parameter (i.e., weights and biases) we have two trajectories (same color, different style), one from our model, one from the baseline system. It is interesting to see that the two dynamics converge to almost identical values (differences are due to round-off errors), confirming the correctness of the results of Section \ref{sec:bp}.
\begin{figure}[h]
\centering
  \includegraphics[width=\linewidth,trim={10 0 0 0},clip]{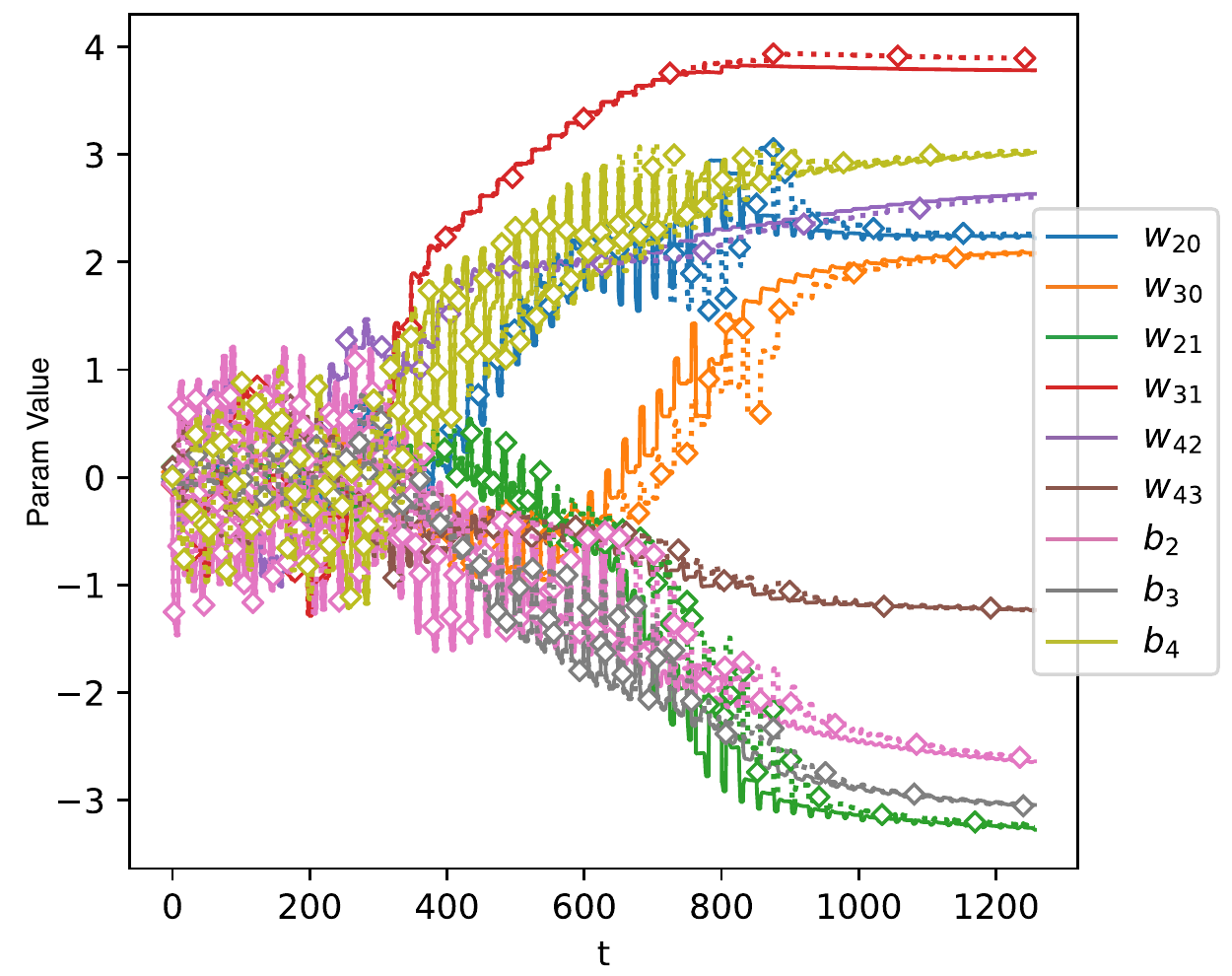}
  \caption{Comparison between the dynamics of the weights and biases of the first-order model (lines) and the baseline system (dotted lines with markers). The value of the weights (or biases) is plotted against time. Trajectories with the same colors are about the same parameter. The two dynamics converge to almost identical values, indistinguishable in the first time instants (differences for larger $t$ are due to numerical errors).}
  \label{fig:1st-vs-bs}
\end{figure}

\subsection{Comparison between Second and First-Order Model}\label{sec:1st-vs-2nd}
Section \ref{sec:bp} suggests that the second-order model can be moved very close to a first-order model when using small masses and strong damping.
We compared the weights (and biases) dynamics of the second-order model, under the aforementioned conditions, with the first-order model.
In particular, we set ${\vartheta = 333.3}, {m_x =1.0 \cdot 10^{-5}}, {m_W = 1.0 \cdot 10^{-3}}$ and $\gamma = \vartheta \cdot m_W$.
Fig. \ref{fig:1st-vs-2nd} (that follows the same organization of Fig. \ref{fig:1st-vs-bs}) shows that the two algorithms exhibit almost identical behaviour.
\begin{figure}[h]
\centering
  \includegraphics[width=0.85\linewidth,trim={10 0 0 0},clip]{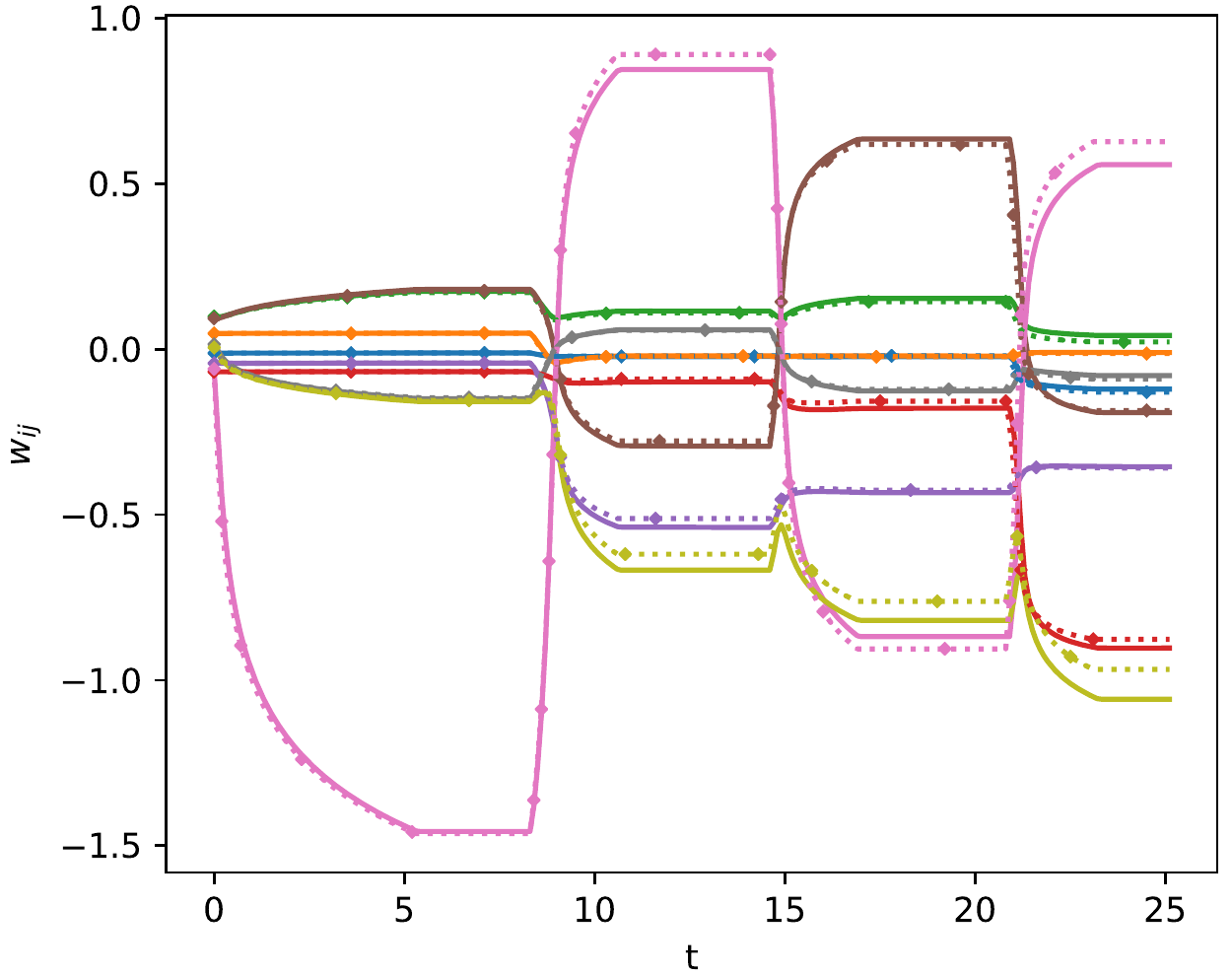}
  \caption{Comparison between the dynamics of the weights and biases of the second-order model (lines) and the first-order one (dotted lines with markers). The value of the weights (or biases) is plotted against time. Trajectories with the same colors are about the same parameter (refer to legend of Fig. \ref{fig:1st-vs-bs}). The two algorithms exhibit almost identical behavior.}
  \label{fig:1st-vs-2nd}
\end{figure}

Notice that to obtain equivalent differential problem formulations it is required to have coherent initial conditions.
In the case of the second-order model, the initial value of derivatives of variables is arbitrary, while in the first-order model it is given by the differential equation itself. An easy way to ensure consistency between the formulations is to multiply, in both the models, the potential term $V$, with a weighing function $(1-e^{\vartheta t})$ that, under the assumption of strong damping, gets to $1$ very quickly. Moreover, with this choice we obtain $\dot{W}(0) = 0$, which implies coherence of initial conditions of the two problems and easier convergence of the solution of the second-order problem to the first-order one. The latter is essentially a continuous-time online learning gradient descent algorithm (as the baseline system), with the only difference being the time-dependent factor $(1-e^{\vartheta t})$. When altering the potential term as suggested, we get a stronger convergence of the derivatives at the left boundary. On the other hand, the overall slight difference between the weights trajectories is motivated by the non-zero inertial properties of the second-order formulation.
\begin{figure}[h]
\centering     
\subfigure[w/o $(1-e^{-\vartheta t})$]{\includegraphics[width=.2\textwidth]{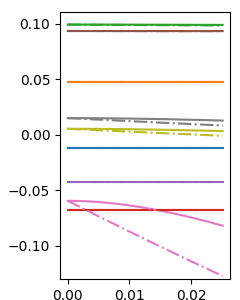}}
\subfigure[with $(1-e^{-\vartheta t})$]{\includegraphics[width=.2\textwidth]{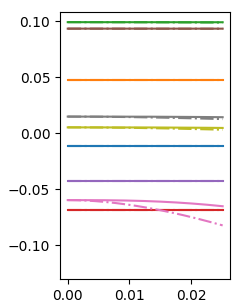}}

\caption{\label{fig:1st-vs-2nd-convergence} Comparison between the dynamics of the weights and biases of the second-order model (lines) and the first-order one (dotted lines with markers). The value of the weights (or biases) is plotted against time. Trajectories with the same colors are about the same parameter (refer to legend of Fig. \ref{fig:1st-vs-bs}). (a) Without altering the potential term with the weighing function $(1-e^{-\vartheta t})$, (b) using the altered potential term.
Notice that (b) shows a stronger coherence close to the boundary $t=0$.}
\end{figure}

\subsection{Comparison between Second-Order Model and Baseline}
We performed an extended comparison between the proposed second-order model and the baseline system. 
We considered different sets of values for the key parameters of our theory, that are the masses $m_x$, $m_W$ and the damping term $\vartheta$. The learning rate $\eta$ of the baseline model is computed as a function of such parameters: $\eta= \Delta t/\gamma = \Delta t/(m_W \vartheta)$ (further details in the introduction of Section \ref{sec:exp}).

Results are reported in Tab. \ref{tab:baseline-vs-stlp}.
When small masses and large damping coefficients are used, the compared systems perform in a very similar way (rows $1,2,4$). Otherwise, the behaviours depart one from each other, since the dynamics of the second-order model are clearly more structured that in the vanilla baseline.
We also measured the norms of the constraining functions involving the
neural units at the last time instant $T$
\[\vert g(T)\vert=\biggl(
\sum_{j=0}^{\nu-1}\Bigl(G^j(T,x(T), W(T))\Bigr)^2\biggr)^{1/2},\]
and we report them in the fifth column of Tab. \ref{tab:baseline-vs-stlp}. Overall, constraints are fulfilled in all the evaluated settings, confirming the soundness of the proposed update scheme for the Lagrange multipliers of Eq. (\ref{mult-lin-eq-x-w}).
\begin{table*}[h]
\caption{Comparison of the second-order model with the baseline system, with different configurations of the model parameters. Rows 1,2,4 are associated with small masses and strong damping. Baseline results are taken form Tab. \ref{tab:baseline}.}
\label{tab:baseline-vs-stlp}
\begin{center}
  \begin{tabular}{|c|c|c|c||c|c|c|c|c||c|c|c|}
    \hline
      \multicolumn{4}{|c||}{} & 
      \multicolumn{5}{c||}{\textsc{Second-Order Model}} & \multicolumn{2}{c|}{\textsc{Baseline}} \\
       \cline{1-11}
      {$\eta$} & $m_x$ & $m_W$ & $\vartheta$  & $\vert g(T)\vert$  & Acc  & Loss & Acc2  & Loss2 &  Acc & Loss \\
      \hline
    0.3 &  $1.0 \cdot 10^{-4}$ & $1.0 \cdot 10^{-2}$ & 33.3 & 0.030 {\tiny $\pm$ 0.007} & 0.900  {\tiny $\pm$  0.166} & 0.348 {\tiny $\pm$  0.557}  & 0.899 {\tiny $\pm$ 0.167} & 0.354 {\tiny $\pm$ 0.557} & 0.900 {\tiny $\pm$ 0.166} & 0.349 {\tiny $\pm$ 0.560} \\ 
    0.03 &  $1.0 \cdot 10^{-4}$ & $1.0 \cdot 10^{-2}$ & 333 & 0.034 {\tiny $\pm$ 0.003} & 0.850 {\tiny $\pm$  0.166}                                                                                                                          & 0.436 {\tiny $\pm$  0.436} & 0.848 {\tiny $\pm$ 0.167} & 0.450 {\tiny $\pm$ 0.436} & 0.875 {\tiny $\pm$ 0.125} & 0.398 {\tiny $\pm$ 0.388}\\
    0.3 &  $1.0 \cdot 10^{-4}$ & $5.0 \cdot 10^{-1}$ & 0.667 & 0.030 {\tiny $\pm$ 0.007} & 0.625  {\tiny $\pm$  0.125} & 1.499  {\tiny $\pm$  0.499}  & 0.624  {\tiny $\pm$  0.124} & 1.504  {\tiny $\pm$ 0.494} & 0.900 {\tiny $\pm$ 0.166} & 0.349 {\tiny $\pm$ 0.560} \\
    0.125 &  $1.0 \cdot 10^{-4}$ & $1.0 \cdot 10^{-2}$ & 80.0 & 0.031  {\tiny $\pm$  0.002} &  0.850  {\tiny $\pm$  0.200}  &  0.466  {\tiny $\pm$  0.591}  & 0.851  {\tiny $\pm$  0.199} &  0.475  {\tiny $\pm$  0.590}  & 0.900 {\tiny $\pm$ 0.166} & 0.315 {\tiny $\pm$ 0.492}\\
    0.8 &  $1.0 \cdot 10^{-4}$ & $1.0 \cdot 10^{-2}$ & 12.5  & 0.031  {\tiny $\pm$  0.004}  &  0.950  {\tiny $\pm$  0.100}  &  0.197  {\tiny $\pm$  0.393}  &  0.950  {\tiny $\pm$  0.100}  &  0.200  {\tiny $\pm$  0.394} & 0.950 {\tiny $\pm$ 0.100} & 0.197 {\tiny $\pm$ 0.393} \\
    0.0025 & 1.0 & 5.0 & 8.0 & 0.030  {\tiny $\pm$  0.001}  &  0.450  {\tiny $\pm$  0.187}  &  1.156  {\tiny $\pm$  0.231}  &  0.467  {\tiny $\pm$  0.179}  &  1.150  {\tiny $\pm$  0.227} & 0.700 {\tiny $\pm$ 0.150} & 0.759 {\tiny $\pm$ 0.209}\\
    \hline
  \end{tabular}
\end{center}
\end{table*}
Interestingly, the differences between $Acc$ and $Acc2$ (and also $Loss$ and $Loss2$) are minimal, suggesting that the classifier is keeping some margin among the vertices of the Boolean hypercube. We further investigated this aspect by visualizing the decision boundaries developed by the model, reported in Fig. \ref{dec2_row5} (related to the setting of row $5$ in Tab. \ref{tab:baseline-vs-stlp}), which confirms the previous considerations. The whole supervision-related areas are correctly classified, even if the sampling trajectory only intercepts a small portion of such areas, i.e. the arcs of circumference contained within the supervision regions. For completeness, we also report the same picture in the case of the baseline model, Fig. \ref{decbase}, that leads to a similar result in terms of margin (we randomly selected one of the $10$ runs).
\begin{figure}[h]
\centering
  \includegraphics[width=0.8\linewidth]{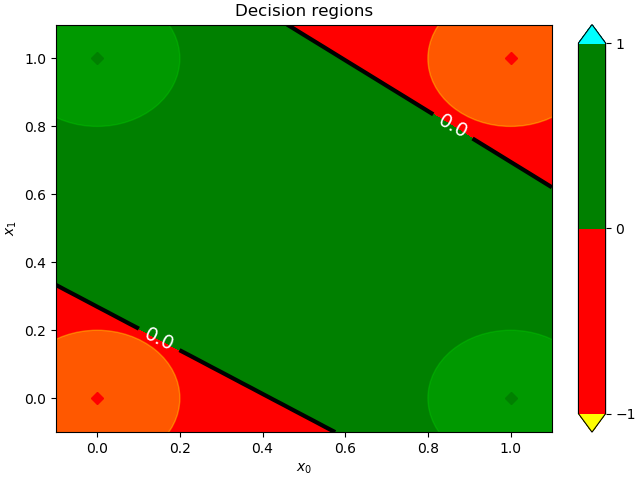}
  \caption{Second-order model. Decision regions of the network resulting from one of the runs using parameters in row 5 of Tab. \ref{tab:baseline-vs-stlp}  (random initialization).}
  \label{dec2_row5}
\end{figure}

\begin{figure}[h]
\centering
  \includegraphics[width=0.8\linewidth]{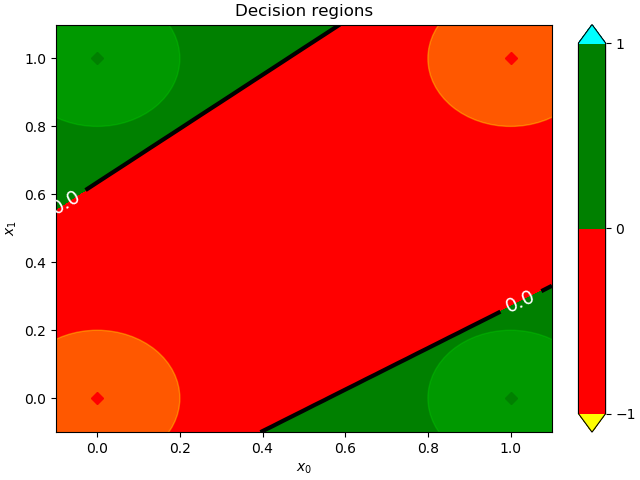}
  \caption{Baseline model. Decision regions of the network resulting from one of the runs using parameters in row 5 of Tab. \ref{tab:baseline-vs-stlp} (random initialization).}
  \label{decbase}
\end{figure}

We analyzed the weight dynamics in three configurations that we selected in order to move from (a) a uniformly weighed case ($m_x=m_W=\vartheta=1$), (b) an intermediate setting ($m_x=0.001$, $m_W=0.1$, $\vartheta=5$) and (c) the already mentioned small-masses-large-damping case ($m_x=0.0001$, $m_W=0.001$, $\vartheta=33$). Fig. \ref{fig:comp1}, top-row, show the dynamics of the second-order model, while Fig. \ref{fig:comp1}, bottom-row, is about the baseline model (having only parameter $\eta$, chosen as previously mentioned in order to produce comparable behavior).
Oscillations are significant (case (a)), unless damping is remarkably high (case (c)).
As expected, when inertial properties of the variables are not negligible, the dynamics are smoother and less peaky with respect to signal variations (slower behavior). 
\begin{figure*}[h]
\centering     
\rotatebox{90}{\hskip 1.3cm Second-Order Model}\hskip 2mm{\includegraphics[width=.31\textwidth]{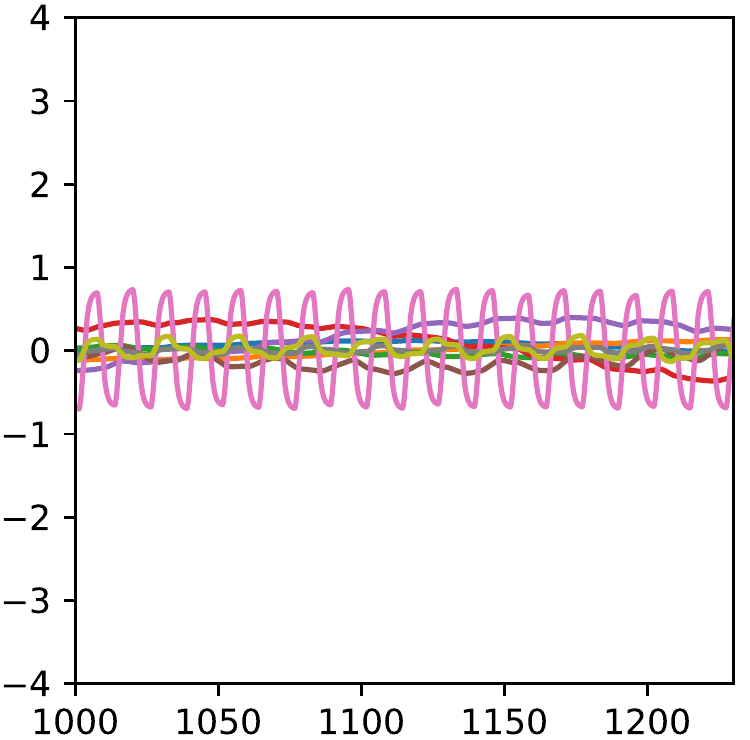}}
{\includegraphics[width=.31\textwidth]{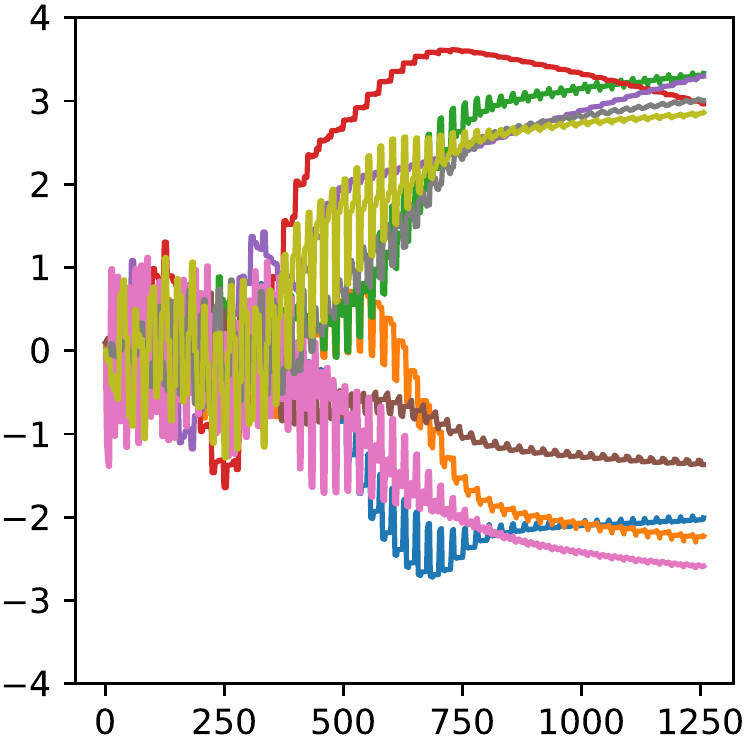}}
{\includegraphics[width=.31\textwidth]{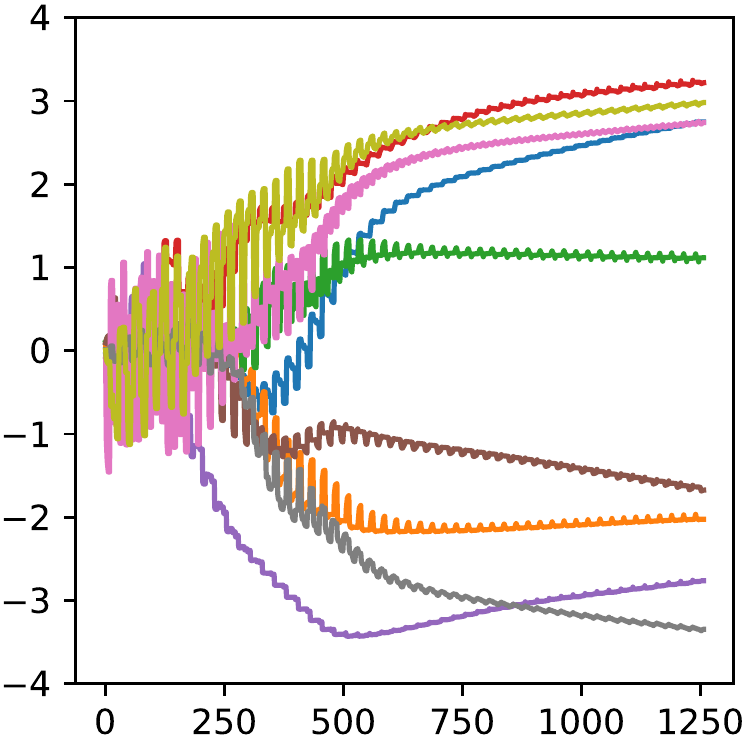}}\\
\rotatebox{90}{\hskip 1.7cm Baseline Model}\hskip 2mm\subfigure[$m_x=m_W=\vartheta=1$]{\includegraphics[width=.31\textwidth]{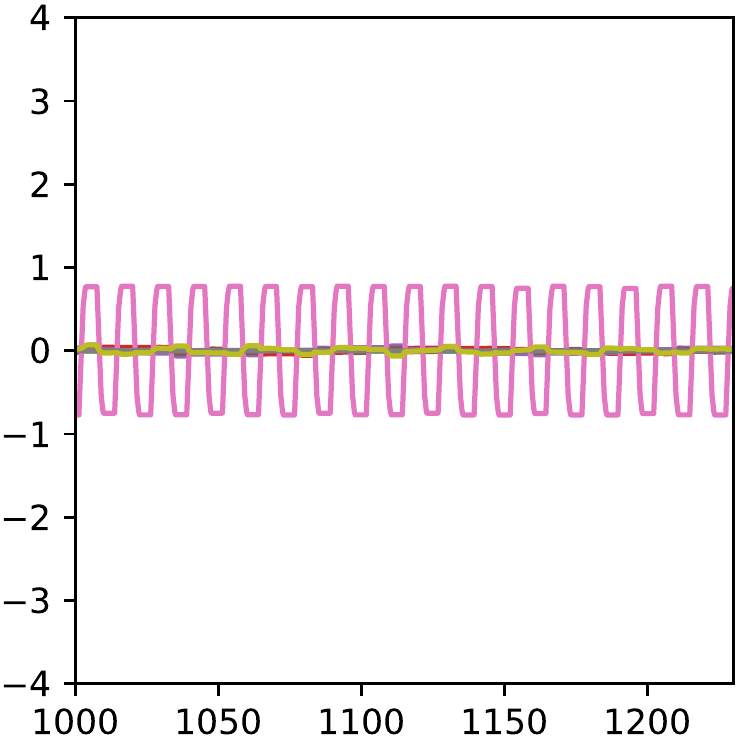}}
\subfigure[$m_x=0.001$, $m_W=0.1$, $\vartheta=5$]{\includegraphics[width=.31\textwidth]{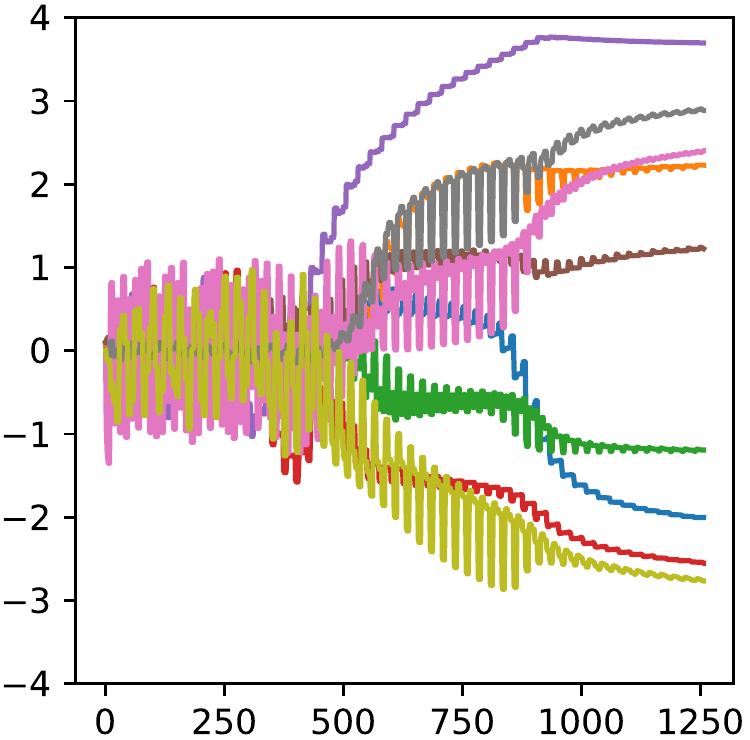}}
\subfigure[$m_x=0.0001$, $m_W=0.001$, $\vartheta=33$]{\includegraphics[width=.31\textwidth]{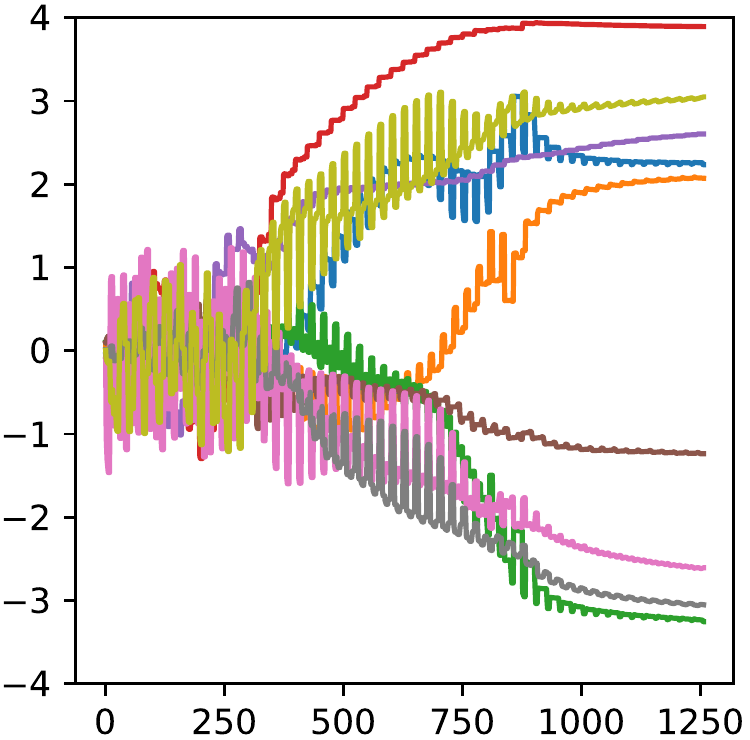}}
\caption{Evolution of the weights (and biases) for different configurations of the parameters (one for each column, (a),(b),(c)). Each curve shows the value (ordinate) of a network parameter versus time (abscissa). Refer to legend of Fig. \ref{fig:1st-vs-bs}. Top row: second-order model. Bottom row: baseline model (the learning rate $\eta$ is a function of $m_W, \vartheta$ - see the text of the paper for further details).}
\label{fig:comp1}
\end{figure*}

\section{Conclusions and Future Work}
\label{sec:conclusions}
We presented a theoretical study of a generic framework that describes the structure of a Neural Network by using constraints among the neural units, and that we injected  into the time domain. When data are gradually presented to the system in an online fashion, the proposed framework allowed us to devise the trajectory of the values of the weights that is optimal with respect to the Least Action Principle. We described the connection of the ideas of this paper and Backpropagation, showing that the latter can be obtained by some choices on the parameters of our model, as confirmed by an experimental analysis in which different aspects of the theory were evaluated. We plan to exploit the outcome of this work in order to handle those dynamic constraints that enforce coherence over time, such as motion coherence \cite{ijcai2019-278}, coherence on predictions over groups data points \cite{simlearn} or on space regions \cite{box,th}.

\section*{Acknowledgment}
\addcontentsline{toc}{section}{Acknowledgment}
This work was partly supported by the PRIN 2017 project RexLearn, funded by the Italian Ministry of Education, University and Research (grant no. 2017TWNMH2).

\bibliography{bib}

\end{document}